\def\arXiv{}
\newcommand{\Oo}{\mathcal{O}}
\newcommand{\F}{\mathcal{F}}
\DeclareMathOperator{\E}{\mathbb{E}}
\newcommand{\expect}[2]{\E_{#1}\left[#2\right]}
\newcommand{\cf}{\text{cf}}
\newcommand{\R}{\mathbb{R}}
\newcommand{\lbin}{l_{0{\text -}1}}
\newcommand{\random}{$\mathtt{random}$}
\newcommand{\guided}{$\mathtt{guided}$}
\newcommand{\noncount}{$\mathtt{non-counterfactual}$}
\newcommand{\adult}{$\mathtt{adult}$}
\newcommand{\arrests}{$\mathtt{arrests}$}
\newcommand{\lendingclub}{$\mathtt{lending~club}$}
\newtheorem{defn}{Definition}
\newtheorem{thm}{Theorem}
\newcommand{\gko}   [1]{{\color{Orange}{GK: #1}}}
\newcommand{\pxm}   [1]{{\color{Red}   {PM: #1}}}
\newcommand{\matt}  [1]{{\color{Green} {MF: #1}}}
\newcommand{\shayak}[1]{{\color{Blue}  {SS: #1}}}
\newcommand{\anupam}[1]{{\color{Maroon}{AD: #1}}}
\newcommand{\gko}   [1]{}
\newcommand{\pxm}   [1]{}
\newcommand{\matt}  [1]{}
\newcommand{\shayak}[1]{}
\newcommand{\anupam}[1]{}
\title{Supervising Feature Influence}
\author{}
\author{Shayak Sen, Piotr Mardziel, Anupam Datta, Matthew Fredrikson\\
Carnegie Mellon University
}
\begin{document}

\maketitle

\begin{abstract}
  Causal influence measures for machine learnt classifiers shed light
  on the reasons behind classification, and aid in identifying
  influential input features and revealing their biases.
  However, such analyses involve evaluating the classifier using
  datapoints that may be atypical of its training distribution.
  Standard methods for training classifiers that minimize empirical
  risk do not constrain the behavior of the classifier on such
  datapoints.
  As a result, training to minimize empirical risk does not
  distinguish among classifiers that agree on predictions in the
  training distribution but have wildly different causal influences.
  We term this problem \emph{covariate shift in causal testing} and
  formally characterize conditions under which it arises.
  As a solution to this problem, we propose a novel active learning
  algorithm that constrains the influence measures of the trained model.
  We prove that any two predictors whose errors are close on both the
  original training distribution and the distribution of atypical points
  are guaranteed to have causal influences that are also close.
  Further, we empirically demonstrate with synthetic labelers that our
  algorithm trains models that (i) have similar causal influences as
  the labeler's model, and (ii) generalize better to
  out-of-distribution points while (iii) retaining their accuracy on
  in-distribution points.
\end{abstract}

\section{Introduction}

Data processors employing machine learning algorithms are increasingly
being required to provide and account for reasons behind their
predictions due to regulations such as the EU GDPR~\cite{eu16gdpr}. This call
for reasoning tools has intensified with the increasing use of machine learning
systems in domains like criminal justice~\cite{angwin16propublica},
credit~\cite{Zest-finance}, and hiring~\cite{ideal}.  Understanding the reasons
behind prediction by measuring the importance or influence of attributes for
predictors has been an important area of study in machine learning.
Traditionally, influence measures were used to inform feature
selection~\cite{Breiman01}.  Recently, influence measures have received renewed
interest as part of a toolbox to explain operations and reveal biases of
inscrutable machine learning
systems~\cite{ribeiro2016lime,qii,adler2018auditing,kilbertus2017avoiding,datta2017proxy}.

Causal influence measures are a particularly important constituent of
this toolbox~\cite{qii,kilbertus2017avoiding}. By identifying attributes that
directly affect decisions, they provide insight about the operation of
complex machine learning systems. In particular, they enable identification of
principal reasons for decisions (e.g., credit denials) by evaluating
\emph{counterfactual queries} that ask whether changing input attributes would produce
a change in the decision. This determination is used to explain and
guard against unjust biases.
For example, the use of a correlate of age like income to
make credit decisions may be justified even if it causes applicants of
one age group to be approved at a higher rate than another whereas the direct
use of age or a correlate like zipcode may not be justified\footnote{This is an
example of a ``business necessity defense'' under US law on disparate
impact~\cite{burger71scotus}.}.

\begin{figure*}[t]
\begin{subfigure}{.5\textwidth}
\centering
\includegraphics[page=1,width=0.8\linewidth]{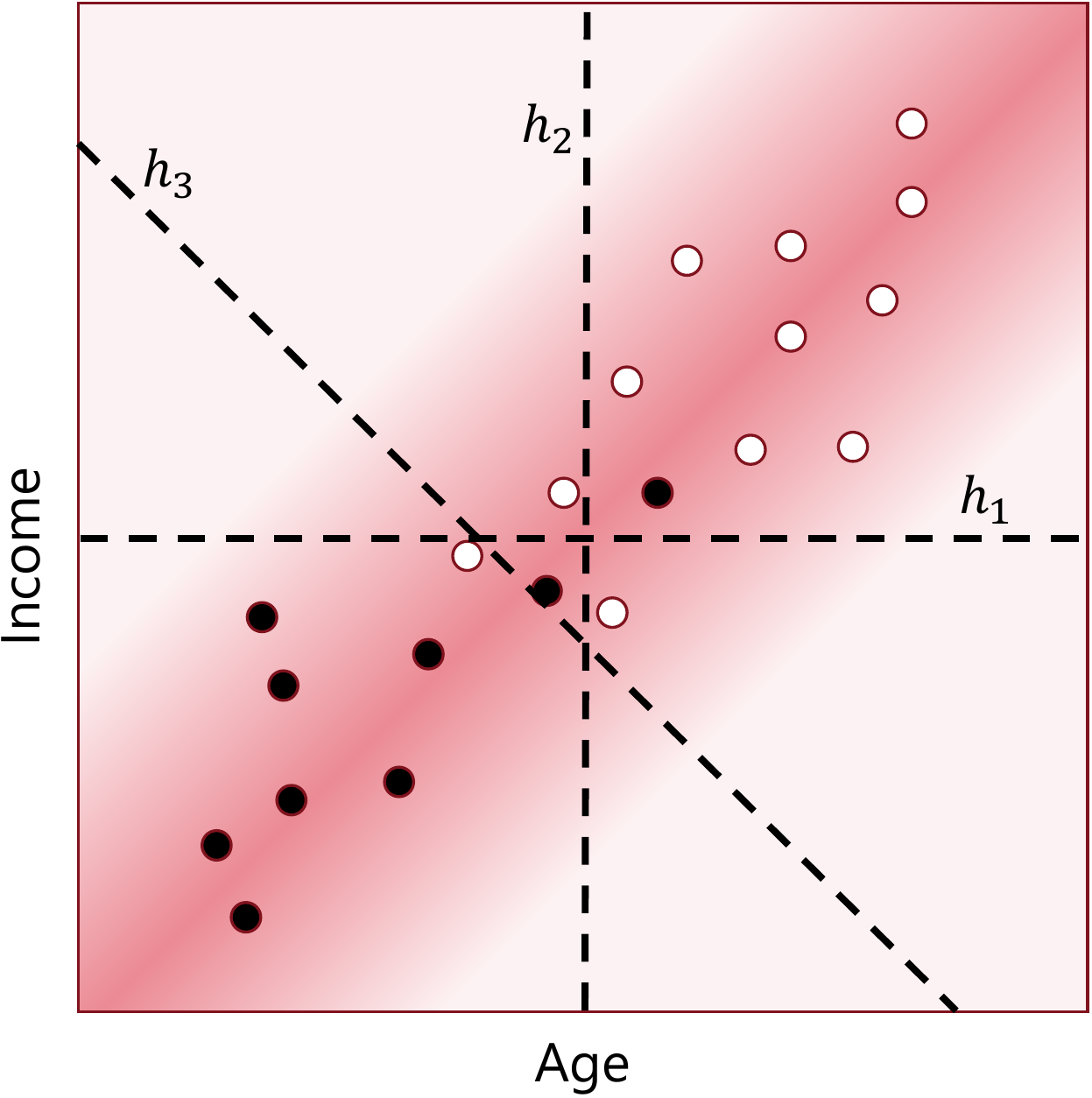}
\caption{Three predictors with similar in-distribution predictions}
\label{fig:dist-figures:1}
\end{subfigure}
\begin{subfigure}{.5\textwidth}
\centering
\includegraphics[page=2,width=0.8\linewidth]{figures/dist-figures-crop.pdf}
\caption{Causal testing for income}
\label{fig:dist-figures:2}
\end{subfigure}
\label{fig:dist-figures}
\caption{The three predictors trying to separate white points from black have similar predictions on the distribution, but very different causal influences. Predictor $h_1$ uses only income, $h_2$ uses only age, and $h_3$ uses a linear combination of the two. Figure (b) shows how counterfactual querying, by keeping age fixed and varying income allows the identification of causal influence and distinguishes between the three classifiers.}
\end{figure*}

Causal analyses of natural systems often involve observing outcomes of
specially created units, e.g.
mice with genes altered.
Such units may be atypical in natural populations.
However, while performing causal analysis over machine learnt systems,
a similar approach encounters an important challenge: machine learning
systems are not expected to be evaluated on atypical or
out-of-distribution units (datapoints), since they have not been
exposed to such units during training.
Standard methods for training classifiers that minimize empirical
risk do not constrain the behavior of the classifier on
such datapoints.
As a result, training to minimize empirical risk does not distinguish
among classifiers that agree on predictions in the training
distribution but have unintended causal influences.
We term this problem \emph{covariate shift in causal testing}.
In other words, typical machine learning algorithms are designed to make
the right predictions but not necessarily for justifiable reasons.

Returning to the example of credit decisions using age and income,
consider a situation where the two are strongly correlated: young
individuals have low income and older individuals have a higher income.
This situation is illustrated in Figure~\ref{fig:dist-figures:1} where
all three predictors $h_1, h_2, h_3$ have low predictive error, but
they make similar predictions for very different reasons.
Since the three predictors have nearly identical predictions on the
distribution, points from the distribution are not useful in
distinguishing the causal influence of the two features.
As a result, causal testing requires the creation of atypical units
that break the correlations between features.
For example, evaluating the predictor on the points on the red bar
(Figure~\ref{fig:dist-figures:2}) where age is fixed and income is
varied informs whether income is used by a given predictor or not.
However, since from an empirical risk minimization perspective the
atypical points are irrelevant, an algorithm optimizing just for
predictive accuracy is free to choose any of the three predictors.

We formally characterize conditions that give rise to covariate shift
in causal testing (Theorem~\ref{thm:constrain}).
Intuitively, this result states that if the units used for measuring
causal influence are sufficiently outside the data distribution,
constraining the behavior of a predictor on the data distribution does
not constrain the causal influences of the predictor.

In order to address this issue, we introduce an active learning algorithm in
Section~\ref{sec:cal}. This algorithm provides an accountability mechanism for data
processors to examine important features, and if their influences are suspicious,
to collect additional information that constrains the feature influences.
This additional information could steer the influences toward more
acceptable values (e.g., by reducing the influence of age in $h_2$ in Figure~\ref{fig:dist-figures:1}).
Alternatively, it could provide additional evidence that the influence values convey appropriate
 predictive power and the suspicions are unfounded (e.g., by preserving influences in
$h_1$ in Figure~\ref{fig:dist-figures:1}).

The active learning process is assisted by two oracles. The first is a feature
selection oracle that examines the causal influences of different features, and
chooses the feature for which counterfactuals queries should be answered.  We
envision this oracle to be an auditor who can identify problematic causal
influences based on background knowledge of causal factors or ethical norms
governing classification.  The second is similar to a standard active learning
oracle, and labels atypical points to answer counterfactual queries.  For
example, for predictor $h_2$ in our running example, the feature selection
oracle might notice that age has an unduly high influence, and can instruct the
algorithm to focus on instances that vary age while keeping income fixed. While
the direct use of age may be obviously problematic, in common applications the
system designer may not have apriori knowledge of which attribute uses are
problematic. The feature selection oracle may be able to spot suspiciously high
or low influences and guide the counterfactual queries that get sent to the
labeler to better inform the learning.

We evaluate the counterfactual active learning
algorithm for linear, decision tree, and random forest models on a number of
datasets, using a synthetic labeler. In particular, we demonstrate that after
counterfactual active learning, the trained classifier has similar causal
influence measures to the labeler. We also show that the classifier can
generalize better to out-of-distribution points. This is an important
consequence of having causal behavior similar to the labeler. Finally, we
demonstrate that the accuracy on the data distribution does not degrade as a
result of this additional training.

\paragraph*{Related Work.}
Prior work on causal learning learns the structure of causal
models~\cite{spirtes2000,hyttinen2013causal}, or given the structure of models,
the functional relationship between variables. In this context, active learning
has been used to aid both the discovery of causal
structures~\cite{tong2001active,he2008learning} and their functional
relationships~\cite{rubenstein2017active}. In this work we don't attempt to
learn true causal models. Instead, our work focuses on constraining the causal
behavior of learnt models. In doing so, we provide an accountability mechanism
for data processors to collect additional data that guides the causal
influences of their models to more acceptable values or justifies the causal
influences of the learnt model.

\paragraph*{Contributions.}
In summary, the contributions of this paper are as follows.
\begin{itemize}
  \item A formal articulation of the \emph{covariate shift in causal testing} problem.
  \item A novel active learning algorithm that addresses the problem.
  \item An empirical evaluation of the algorithm for standard machine learning predictors on a number of real-world datasets.
\end{itemize}

\section{Background}

\newcommand{\calX}[0]{{\cal X}}
\newcommand{\calY}[0]{{\cal Y}}

A predictor $h$ is a function $\calX \rightarrow \calY$ that operates
on an input space $\calX \subseteq \R^{n}$ to a space of predictions
$\calY$.
The input space $\calX$ has a probability distribution $P$ associated
with it, where $P(X = x)$ is the frequency of drawing a particular
instance $x$.

\subsection{Risk Minimization}

Given random variables $X \in \calX$, and $Y \in \calY$, and a loss
function $l$, the \emph{risk} associated with predictor $h$ is given
by
\[R(h) = \E\left[l(h(X), Y)\right].\] The goal of supervised learning
algorithms under a risk minimization paradigm is to minimize $R(h)$.
In general, the distributions over $X$ and $Y$ are unknown.
As a result, learning algorithms minimize \emph{empirical risk} over a
sample $\{(x_i, y_i)\}_{1..N}$
\[\hat{R}(h) = \frac{1}{N}\sum_{i=1}^N l(h(x_i), y_i).\]

Note that the risk minimization paradigm only constrains the behavior
of a predictor on points from the distribution and treats any two
predictors that have identical behavior on points from the
distribution interchangeably.

For ease of presentation, we focus on binary classification tasks where
$\calY$ is binary, and use the $0-1$ loss function
$\lbin(\hat{y}, y) = I(\hat{y}\not= y) = \left|\hat{y} - y\right|$.

\subsection{Counterfactual Influence}

The influence of a feature $f$ for a predictor $h$ is measured by
comparing the outcomes of $h$ on the data distribution to the outcomes
of a counterfactual distribution that changes the value of
$f$. We denote the data distribution over features as $X$ and the
counterfactual distribution with respect to feature $f$ as $X^f_\cf$.

A number of influence measures proposed in prior work can be viewed as
instances of this general idea.
For example, Permutation Importance~\cite{Breiman01}, measures the
difference in accuracy between $X$ and $X^f_\cf$, where $X^f_\cf$ is
chosen as $X$ randomly permuted.
In~\cite{adler2018auditing}, $X^f_\cf$ is chosen as the minimal
perturbation of $X$ such that feature $f$ cannot be predicted.
In this paper, we use Average Unary QII (auQII), an instance of
Quantitative Input Influence~\cite{qii}, as our causal influence
measure.
The counterfactual distribution $X^f_\cf$ for auQII is represented as
$X_{-f}U_{f}$, where the random variable $X_{-f}$ represents features except $f$ where  $U_f$ is sampled from the marginal
distribution of $f$ independently of the rest of the features $X_{-f}$.
\[P(X_{-f}U_f = x) = P(X_{-f} = x_{-f})P(U_f = x_f)\]

\begin{defn}
  Given a  model $h$, the Average Unary QII
  (auQII) of an input $f$, written $\iota_f(h)$, is defined as
  \[\iota_f(h) = \E_{X, U_f}\left[\lbin(h(X), h(X_{-f}U_f))\right]\]
\end{defn}

\section{Covariate shift in Causal Testing}
\label{sec:shift}

In this section, we discuss some of the theoretical implications of the
covariate shift in causal testing. First, we show in
Theorem~\ref{thm:constrain} that risk minimization does not constrain
influences when the data distribution diverges significantly from the
counterfactual distribution. In other words, predictors trained under an ERM
regime are free to choose influential factors.  Further, in
Theorem~\ref{thm:relating}, we demonstrate predictors that agree on
predictions on both the data distribution and the counterfactual distribution
have similar influences. This theorem forms the motivation for our
counterfactual active learning algorithm presented in Section~\ref{sec:cal}
that attempts to minimize errors on both the data and the counterfactual
distribution by adding points from the counterfactual distribution to the
training set.

\subsection{Counterfactual divergence}

We first define what it means for an influence measure to be unconstrained by its behavior
on the data distribution.
An influence measure $\iota_f$ is unconstrained for a predictor $h$, and data distribution
$X$, if it is possible to find a predictor $h'$ which has similar predictions
on the data distribution but very different influences. More specifically, if the influence is high,
then it can be reduced to a lower value, and vice versa.

\begin{defn}
  An influence measure $\iota_f$ is said to be
  $(\epsilon, \delta)$-unconstrained, for $0 \leq \delta \leq 1/2$,
  for a predictor $h$, if there exists predictors $h_1, h_2$ such
  that for $i \in \{1,2\}$, $P(h(X) \not= h_i(X)) \leq \epsilon$, and
  $\iota_f(h_1) \geq \delta$ and $\iota_f(h_2) \leq 1 - \delta$.
\end{defn}

The following theorem shows that if there exist regions $\varphi$ in
the input space with low probability weight in the data distribution
and high weight in the counterfactual distribution, i.e.
the data distribution and counterfactual distribution diverge
significantly, then any model will have unconstrained causal
influences.
As a result, predictors trained under an ERM regime are free to choose
influential causal factors.

\begin{thm}
  \label{thm:constrain}
  If there exists a predicate $\varphi$, such that $P(\varphi(X)) \leq \epsilon$ and $P(\varphi(X_{-f}U_f)\land \neg\varphi(X)) = \gamma$, then for any $h$,
  $\iota_f(h)$ is $(\epsilon, \gamma/2)$-unconstrained.
\end{thm}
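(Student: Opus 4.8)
The plan is to exploit the fact that the competing predictors $h_1$ and $h_2$ only have to differ from $h$ on the region $\varphi$, which by hypothesis carries mass at most $\epsilon$ under $X$; consequently \emph{any} modification of $h$ supported on $\varphi$ automatically satisfies the closeness requirement $P(h(X) \neq h_i(X)) \leq P(\varphi(X)) \leq \epsilon$. All the real work is in choosing these modifications so that one predictor is forced to have influence at least $\gamma/2$ and the other influence at most $1 - \gamma/2$.

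First I would rewrite the influence measure as a disagreement probability: since $\lbin(a,b) = I(a \neq b)$, the definition gives $\iota_f(h) = P(h(X) \neq h(X_{-f}U_f))$, where the joint law couples $X = (X_{-f}, X_f)$ and $X_{-f}U_f = (X_{-f}, U_f)$ through the shared component $X_{-f}$ (with $U_f$ drawn independently from the marginal of $f$). Introduce the event $E = \{\neg\varphi(X) \wedge \varphi(X_{-f}U_f)\}$, which by assumption satisfies $P(E) = \gamma$; on $E$ the original point lies outside $\varphi$ while the counterfactual point lies inside it.

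Now define $h_1$ to coincide with $h$ outside $\varphi$ and to equal a constant $c_1 \in \{0,1\}$ on $\varphi$, where $c_1$ is chosen so that $P(h(X) = c_1 \mid E) \leq 1/2$ (such a $c_1$ always exists since the two conditional probabilities sum to $1$; if $P(E) = 0$ then $\gamma = 0$ and the claim is vacuous). On $E$ we have $h_1(X) = h(X)$ and $h_1(X_{-f}U_f) = c_1$, so
\[
\iota_f(h_1) \;\geq\; P\big(E \wedge h(X) \neq c_1\big) \;=\; P(E)\,P(h(X) \neq c_1 \mid E) \;\geq\; \gamma/2 .
\]
Symmetrically, define $h_2$ to coincide with $h$ off $\varphi$ and to equal $c_2 := 1 - c_1$ on $\varphi$, so that $P(h(X) = c_2 \mid E) \geq 1/2$; then on $E$ the two evaluations of $h_2$ agree whenever $h(X) = c_2$, giving $P(h_2(X) = h_2(X_{-f}U_f)) \geq P(E)\,P(h(X) = c_2 \mid E) \geq \gamma/2$, hence $\iota_f(h_2) = 1 - P(h_2(X) = h_2(X_{-f}U_f)) \leq 1 - \gamma/2$. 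Since $h_1$ and $h_2$ each differ from $h$ only on $\varphi$, both satisfy $P(h(X) \neq h_i(X)) \leq \epsilon$, and noting $\gamma/2 \leq 1/2$ this is exactly the statement that $\iota_f$ is $(\epsilon, \gamma/2)$-unconstrained for $h$.

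I expect the only delicate point to be the bookkeeping around conditioning on $E$: making the majority/minority argument clean when $P(E)$ is small or zero, and observing that any additional disagreement (for $h_1$) or agreement (for $h_2$) coming from outside $E$ only strengthens the respective bound rather than weakening it. Everything else is direct substitution into the two definitions.
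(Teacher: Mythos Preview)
Your argument is correct. The approach differs from the paper's in a useful way: the paper uses a probabilistic averaging argument, sampling $h_i$ uniformly over all functions that agree with $h$ off $\varphi$ and are arbitrary on $\varphi$, then observing that on the event $\theta = \{\varphi(X_{-f}U_f)\wedge\neg\varphi(X)\}$ the expected disagreement is exactly $1/2$; an averaging step then extracts deterministic $h_1^*,h_2^*$ meeting the bounds. You instead construct $h_1,h_2$ directly as \emph{constant} modifications on $\varphi$, choosing the constant by a majority/minority argument on $P(h(X)=\cdot\mid E)$. This is more elementary and fully explicit (no existence-by-averaging step), at the cost of requiring the small case split on whether $P(E)=0$. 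The paper's randomized construction is agnostic to the distribution of $h(X)$ on $E$ and so avoids that case split, but is otherwise doing the same work; your constant-on-$\varphi$ witnesses are simply particular members of the family the paper averages over.
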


\begin{proof}
  The proof proceeds via an averaging argument.
  Let $ \Pi $ be the set of all functions from $ \calX $ to
  $ \{0,1\} $.
  For $i \in \{1,2\}$ consider $h_i$ sampled uniformly from the set of
  deterministic functions that map values $ x $ satisfying $ \varphi $
  to $ h(x) $ and according to some $ \pi \in \Pi $ otherwise:
  $ \{ x \mapsto
  h(x) \text{ when } \neg \varphi(x) \text{ and } \pi(x) \text{ o.w.
  } \}_{\pi \in \Pi} $.
  Notice that $ P(h_i(X) | \neg \varphi(X)) $ is therefore uniform in
  $ \{0, 1\}$.

   As $h_i(x) = h(x)$ when $\neg\varphi(x)$, any such classifier
   satisfies $P(h(X) \not= h_i(X)) \leq P(\varphi(x)) \leq \epsilon$.
   Computing the expected influence over all $h_1$, we have
   \begin{align*}
     &  \expect{h_1}{\iota_f(h_1))}\\
   =&  \E_{h_1}\left[\E_{X, U_f}\left[I(h_1(X) \not= h_1(X_{-f}U_f))\right]\right] \\
    & \text{Let $\theta = \varphi(X_{-f}U_f)\land \neg\varphi(X)$. Then, $P(\theta) = \gamma$ }\\
   =&  \gamma \expect{X, U_f}{\expect{h_1}{I(h_1(X) \not= h_1(X_{-f}U_f))} ~|~ \theta}\\
   & +  (1-\gamma) \expect{X, U_f}{\expect{h_1}{I(h_1(X) \not= h_1(X_{-f}U_f))} ~|~ \neg\theta}\\
   &  \left(\text{if $\theta$, then  $\expect{h_1}{I(h_1(X) \not= h_1(X_{-f}U_f))} = \frac{1}{2}$}\right) \\
   \geq& \gamma\frac{1}{2} + (1-\gamma)0\\
   =&\gamma/2.
   \end{align*}
   By an averaging argument, there exists an $h_1^*$ such that  $\iota_f(h_1^*) \geq \gamma/2$.
   Similarly, computing the expected influence over all $h_2$, we have
   \begin{align*}
     &  \expect{h_2}{\iota_f(h_2))}\\
   =&  \E_{h_2}\left[\E_{X, U_f}\left[I(h_2(X) \not= h_2(X_{-f}U_f))\right]\right] \\
    & \text{Let $\theta = \varphi(X_{-f}U_f)\land \neg\varphi(X)$. Then, $P(\theta) = \gamma$ }\\
   =&  \gamma \expect{X, U_f}{\expect{h_2}{I(h_2(X) \not= h_2(X_{-f}U_f))} ~|~ \theta}\\
   & +  (1-\gamma) \expect{X, U_f}{\expect{h_2}{I(h_2(X) \not= h_2(X_{-f}U_f))} ~|~ \neg\theta}\\
   \leq& \gamma\frac{1}{2} + (1-\gamma)1\\
   =&1 - \gamma/2
   \end{align*}
   Again, by an averaging argument, there exists an $h_2^*$ such that  $\iota_f(h_2^*) \leq 1 - \gamma/2$
\end{proof}

\subsection{Relating counterfactual and true accuracies}
\label{sec:shift:relating}

We now show that if the two models agree on both the true and the counterfactual distributions, then they have similar influences.

\begin{defn}
  Given a loss function $l$ and predictors $h$ and $h'$, the expected
  loss of the $h$ with respect to $h'$, written $ err(h, h', X) $, is
\[err(h, h', X) = \E_{X}\left[\lbin(h(X), h'(X))\right].\]
\end{defn}

\begin{thm}
  \label{thm:relating}
  If $err(h, h', X) \leq \epsilon_1$, and $err(h, h', X_{-f}U_f) \leq \epsilon_2$, then $\left|\iota_f(h) - \iota_f(h')\right| \leq \epsilon_1 + \epsilon_2$
\end{thm}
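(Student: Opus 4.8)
The plan is to reduce the claim to a pointwise triangle inequality for the $0$--$1$ loss and then integrate. The first observation is that $\lbin(a,b)=|a-b|$ is a metric on $\{0,1\}$ (it is just the restriction of the usual metric on $\R$), so for any $a,b,a',b'\in\{0,1\}$ one has
\[
\bigl|\,\lbin(a,b)-\lbin(a',b')\,\bigr|\;\le\;\lbin(a,a')+\lbin(b,b').
\]
This follows by applying the triangle inequality twice: $\lbin(a,b)\le\lbin(a,a')+\lbin(a',b')+\lbin(b',b)$ gives one direction, and swapping the roles of the primed and unprimed quantities gives the other.

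Next I would fix a single probability space carrying the pair $(X,\,X_{-f}U_f)$. Recall that $X_{-f}U_f$ is obtained from $X$ by overwriting coordinate $f$ with an independent draw $U_f$ from the marginal of $f$, so this pair has a well-defined joint law, and both $\iota_f(h)$ and $\iota_f(h')$ are expectations against exactly this law. Instantiating the pointwise inequality with $a=h(X)$, $b=h(X_{-f}U_f)$, $a'=h'(X)$, $b'=h'(X_{-f}U_f)$ yields, almost surely,
\[
\bigl|\,\lbin(h(X),h(X_{-f}U_f))-\lbin(h'(X),h'(X_{-f}U_f))\,\bigr|\;\le\;\lbin(h(X),h'(X))+\lbin(h(X_{-f}U_f),h'(X_{-f}U_f)).
\]

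Finally I would take expectations over this common space. On the left, $|\iota_f(h)-\iota_f(h')|=|\E[\lbin(h(X),h(X_{-f}U_f))]-\E[\lbin(h'(X),h'(X_{-f}U_f))]|\le\E[\,\cdot\,]$ of the left-hand expression, by the triangle inequality for expectation. On the right, the first term is by definition $err(h,h',X)\le\epsilon_1$, and the second equals $err(h,h',X_{-f}U_f)\le\epsilon_2$, since the marginal law of $X_{-f}U_f$ under the coupling is exactly the counterfactual distribution in that error term. Combining gives $|\iota_f(h)-\iota_f(h')|\le\epsilon_1+\epsilon_2$.

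The argument is essentially routine; the one point I would flag as the main potential pitfall is the bookkeeping of distributions. One must use the \emph{same} coupling of $X$ and $X_{-f}U_f$ for $h$ and for $h'$, so that the pointwise bound can be applied before integrating, and one must check that the marginal of $X_{-f}U_f$ under that coupling coincides with the distribution defining $err(h,h',X_{-f}U_f)$. Once these are lined up, the bound drops out immediately, with no structural assumption on $h$, $h'$, or any predicate $\varphi$.
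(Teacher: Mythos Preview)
Your proof is correct and is essentially the same argument as the paper's: both use that $\lbin(a,b)=|a-b|$ is a metric, apply the triangle inequality twice to obtain a pointwise bound, and then take expectations over the joint law of $(X,X_{-f}U_f)$ to recover $err(h,h',X)+err(h,h',X_{-f}U_f)$. Your version is a bit more explicit about the coupling bookkeeping, but the route is identical.
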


\begin{proof}
\begin{align*}
  & \left|\iota(h, f) - \iota(h', f)\right| \\
  = & \left|\E_{X, U_f}\left[h(X) \not= h(X_{-f}U_f) \right]\right.\\
  &\hspace{5em}\left. -  \E_{X, U_f}\left[h'(X) \not= h'(X_{-f}U_f) \right]\right|  \\
  = & \left|\E_{X, U_f}\left[\left|h(X) - h(X_{-f}U_f)\right| \right]\right. \\
  &\hspace{5em}- \left.\E_{X, U_f}\left[\left|h'(X) - h'(X_{-f}U_f) \right|\right]\right|  \\
  & \text{by triangle inequality}\\
  \leq & \E_{X, U_f}\left[\left|h(X) - h(X_{-f}U_f)-  h'(X) + h'(X_{-f}U_f) \right|\right]  \\
  =    & \E_{X, U_f}\left[\left|h(X) -  h'(X) + h'(X_{-f}U_f) - h(X_{-f}U_f) \right|\right]  \\
  & \text{by triangle inequality} \\
  \leq    & \E_{X, U_f}\left[\left|h(X) -  h'(X)\right|\right]\\
  &\hspace{5em}+ E_{X, U_f}\left[\left|h'(X_{-f}U_f) - h(X_{-f}U_f) \right|\right]  \\
  = &  err(h, h', X) +  err(h, h', X_{-f}U_f) \\
  \leq & \epsilon_1 + \epsilon_2
\end{align*}
\end{proof}

\section{Counterfactual Active Learning}
\label{sec:cal}

\begin{figure*}[!htb]
\begin{subfigure}{0.01\textwidth}
  \rotatebox[origin=c]{90}{\adult}
\end{subfigure}
\begin{subfigure}{.33\textwidth}
\centering
\includegraphics[width=0.9\linewidth]{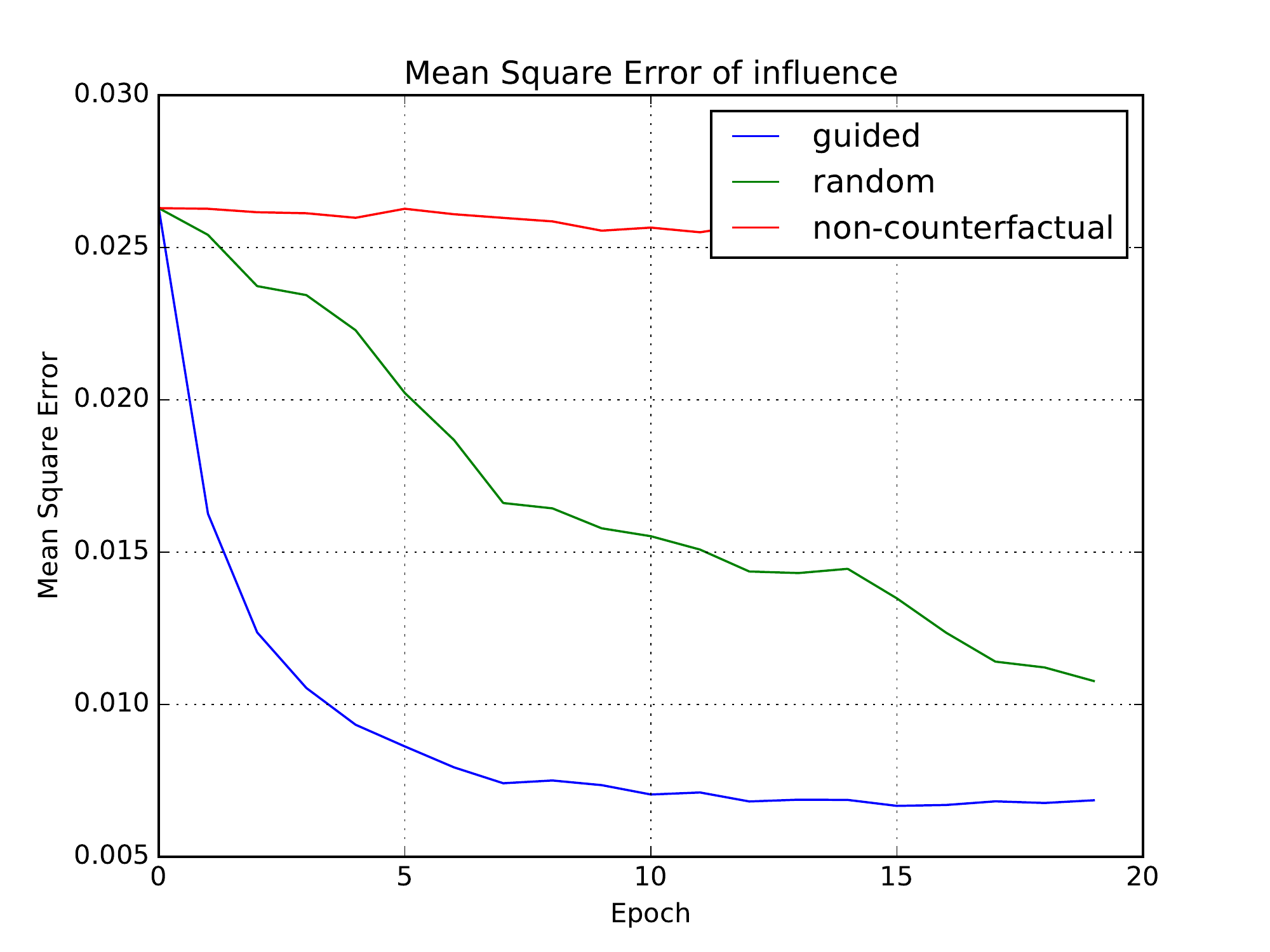}
\end{subfigure}
\begin{subfigure}{.33\textwidth}
\centering
\includegraphics[width=0.9\linewidth]{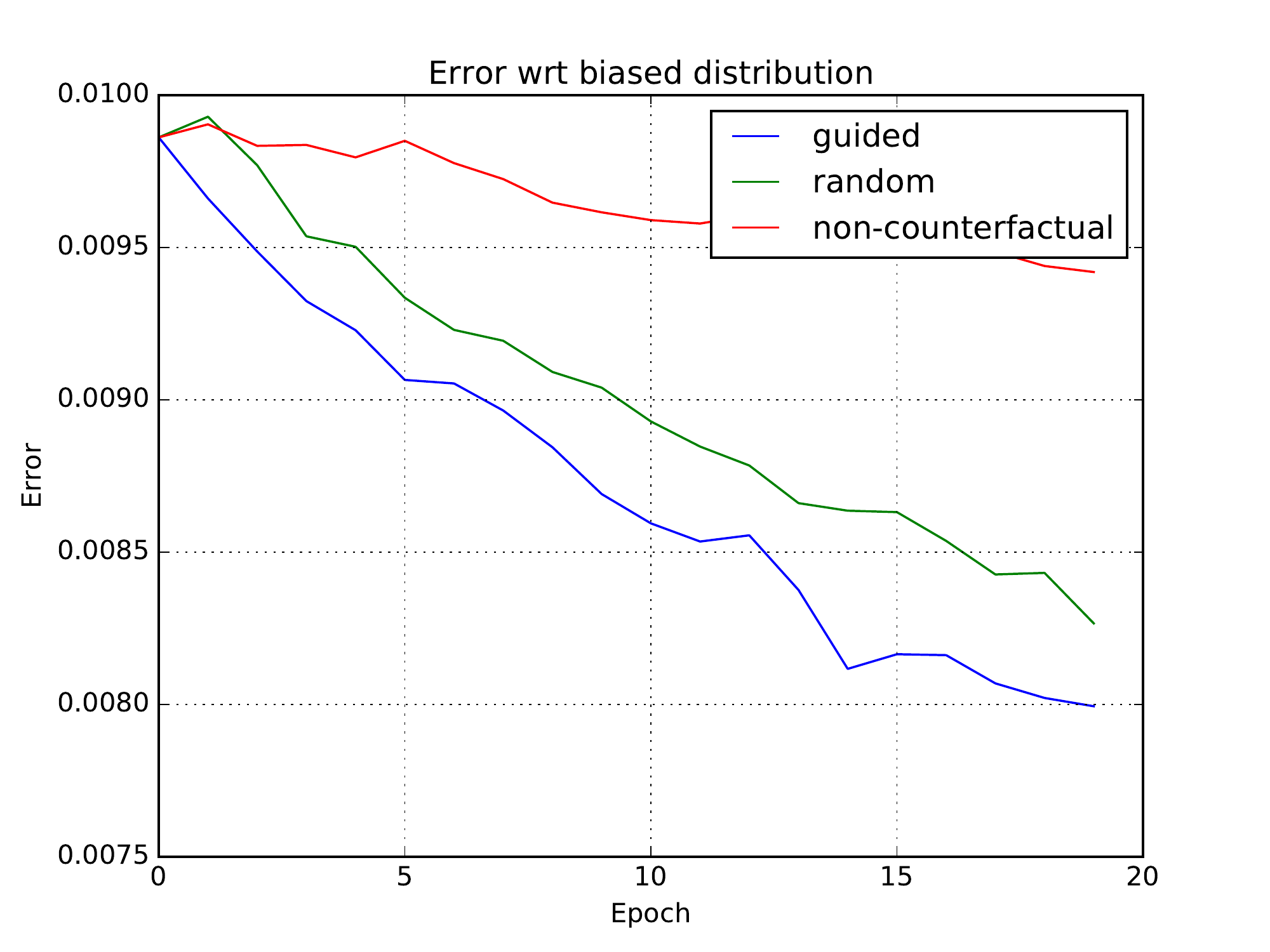}
\end{subfigure}
\begin{subfigure}{.33\textwidth}
\centering
\includegraphics[width=0.9\linewidth]{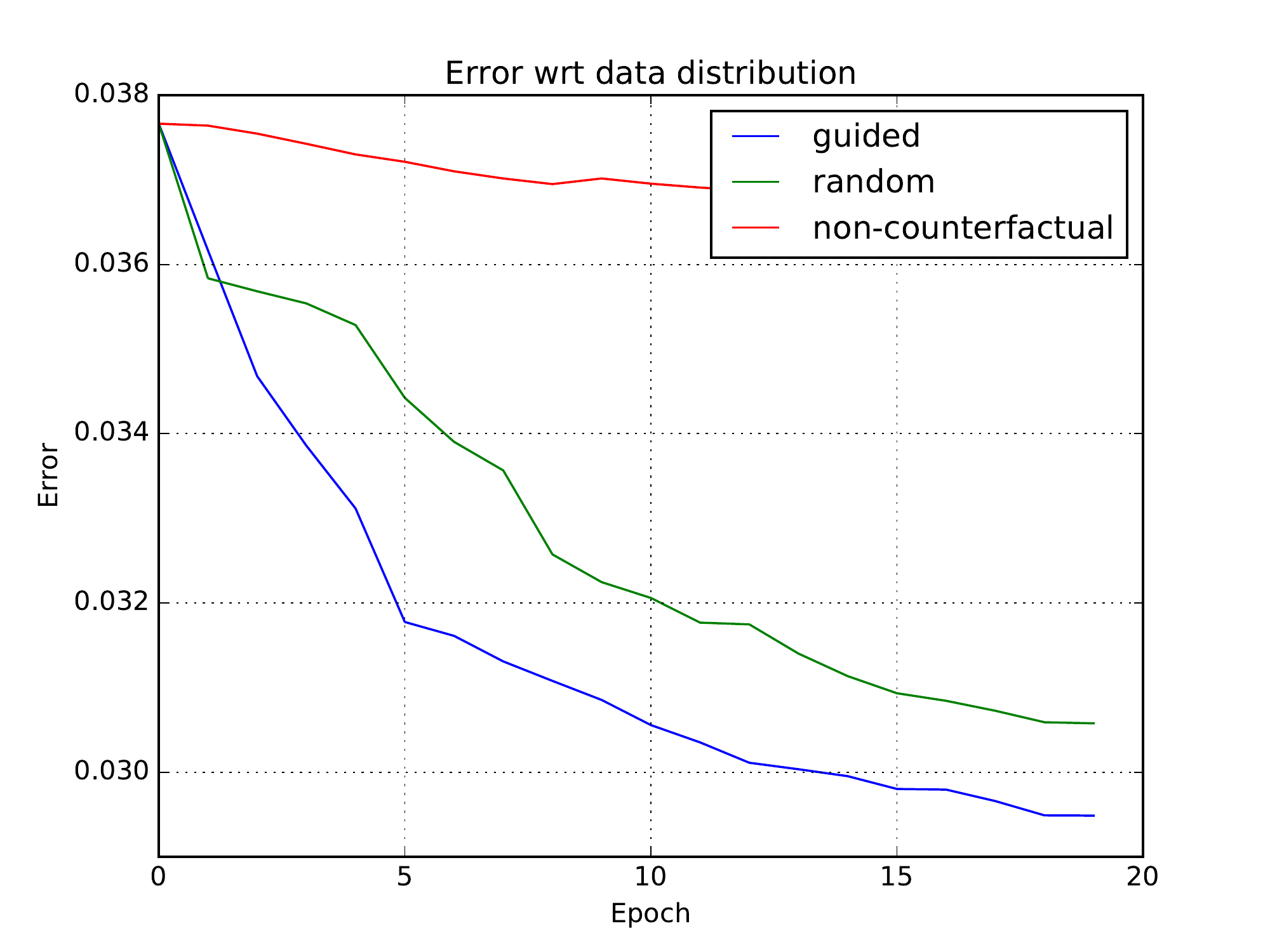}
\end{subfigure}

\begin{subfigure}{0.01\textwidth}
  \rotatebox[origin=c]{90}{\arrests}
\end{subfigure}
\begin{subfigure}{.33\textwidth}
\centering
\includegraphics[width=0.9\linewidth]{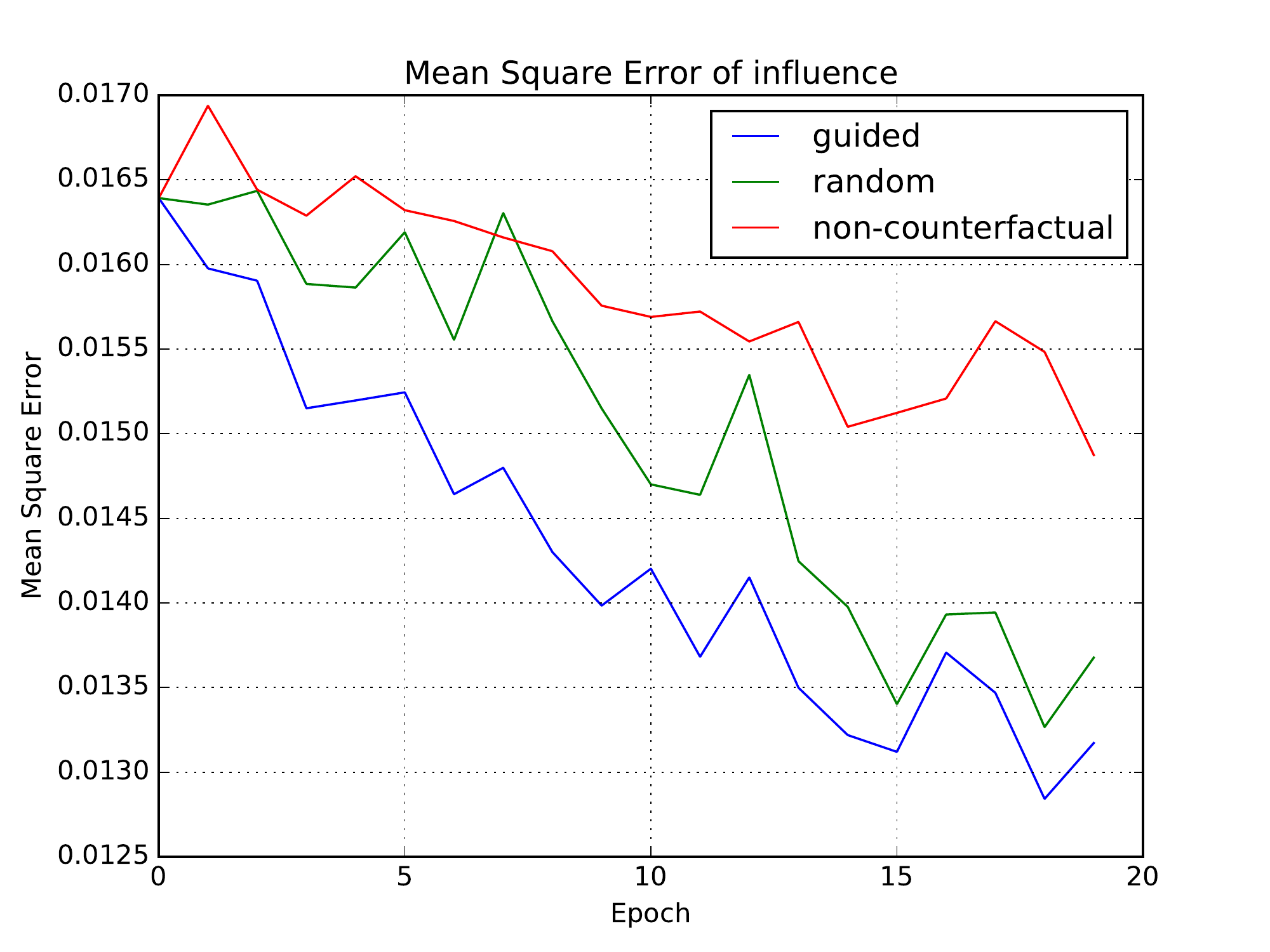}
\end{subfigure}
\begin{subfigure}{.33\textwidth}
\centering
\includegraphics[width=0.9\linewidth]{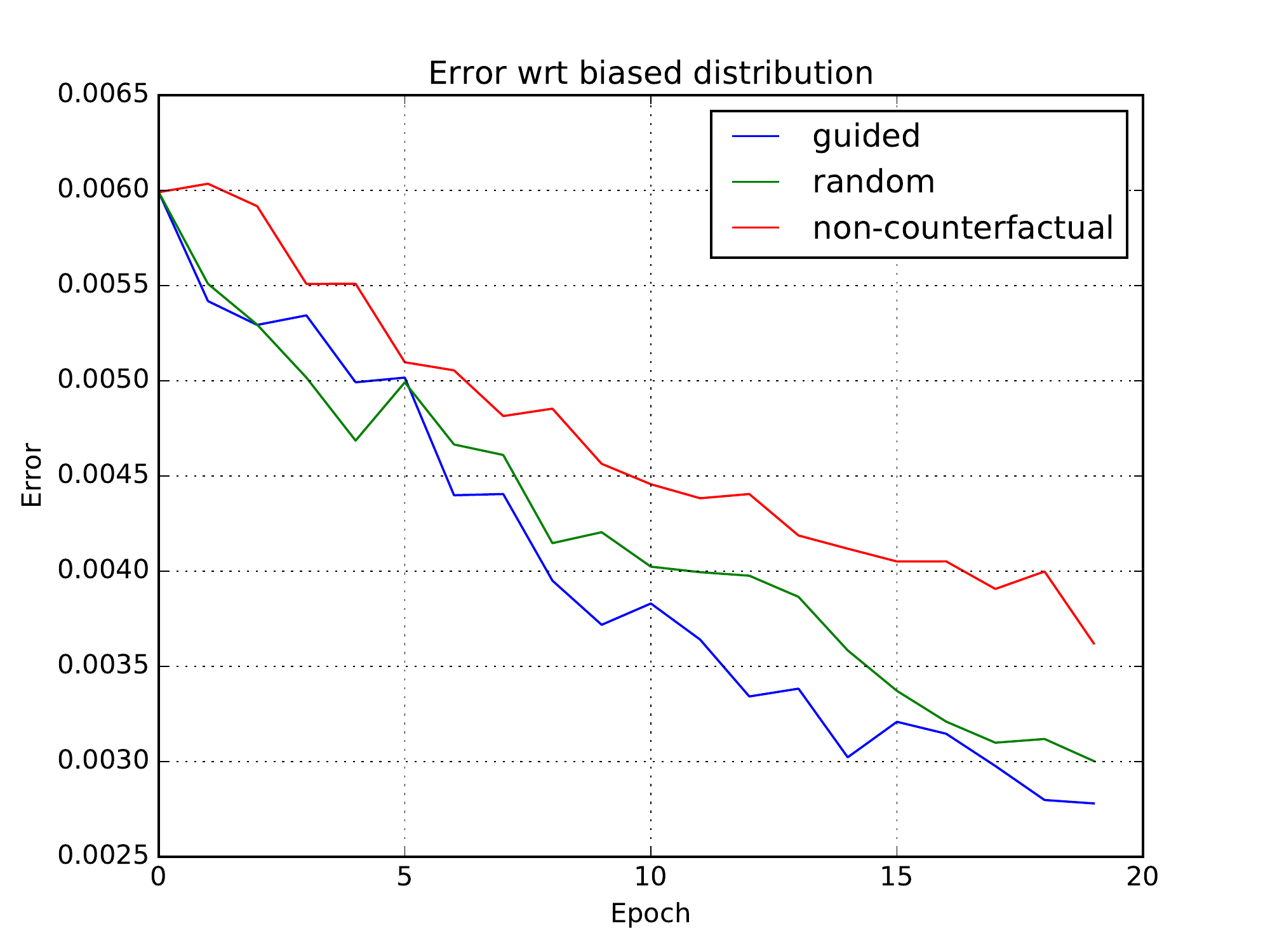}
\end{subfigure}
\begin{subfigure}{.33\textwidth}
\centering
\includegraphics[width=0.9\linewidth]{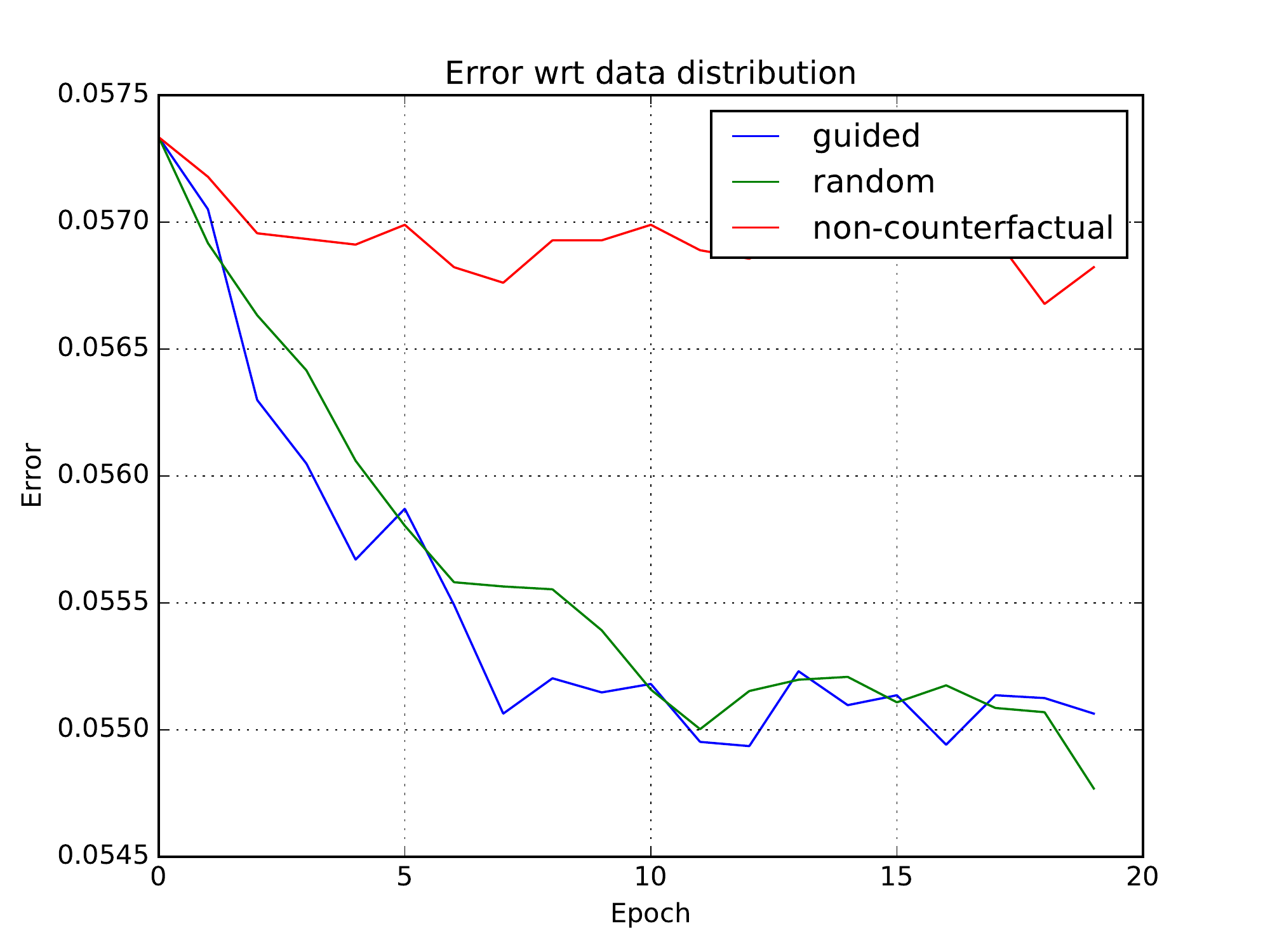}
\end{subfigure}

\begin{subfigure}{0.01\textwidth}
  \rotatebox[origin=c]{90}{\lendingclub}
\end{subfigure}
\begin{subfigure}{.33\textwidth}
\centering
\includegraphics[width=0.9\linewidth]{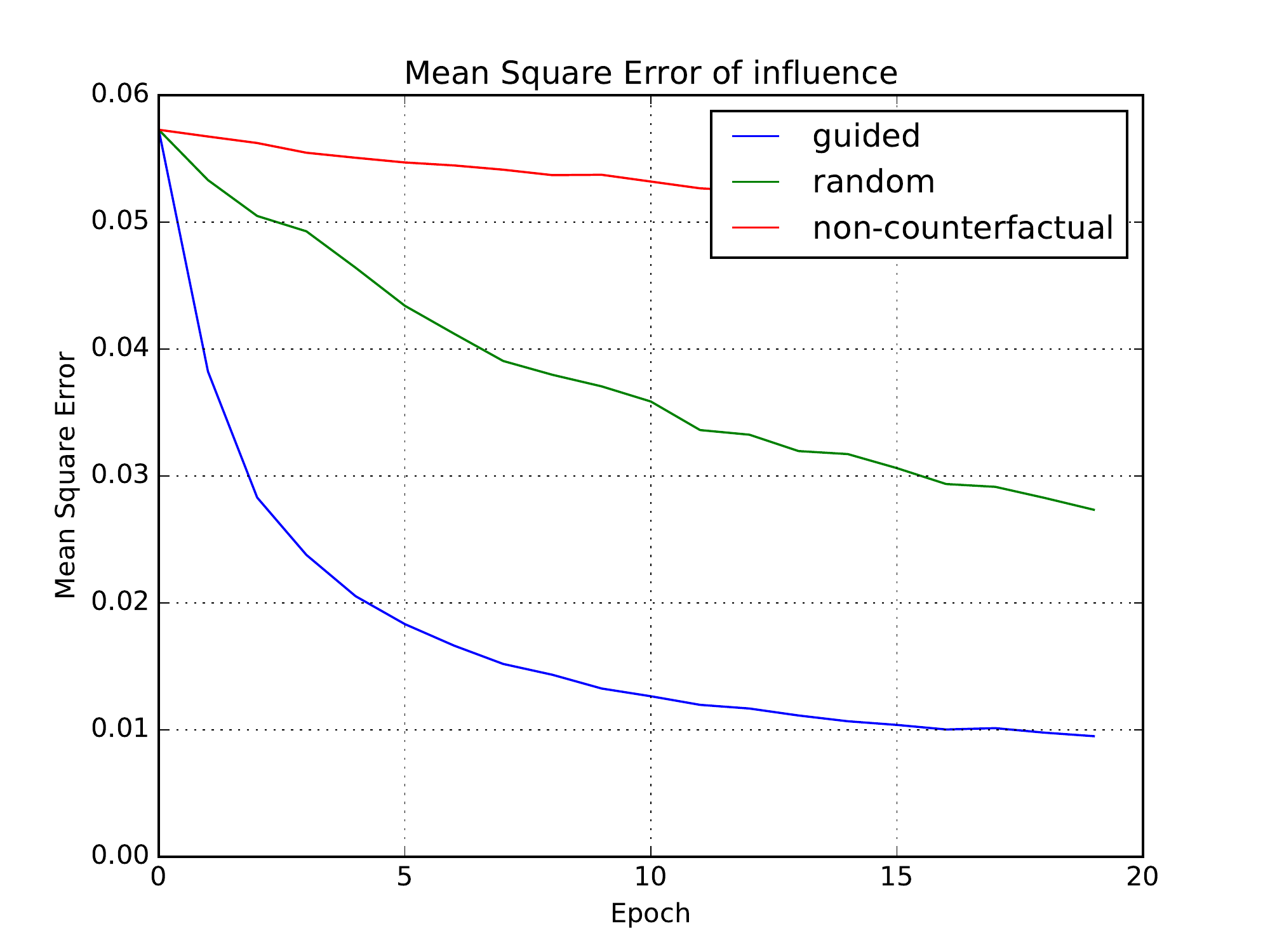}
\caption{Convergence of difference of influence}
\label{fig:linear-adult:inf}
\end{subfigure}
\begin{subfigure}{.33\textwidth}
\centering
\includegraphics[width=0.9\linewidth]{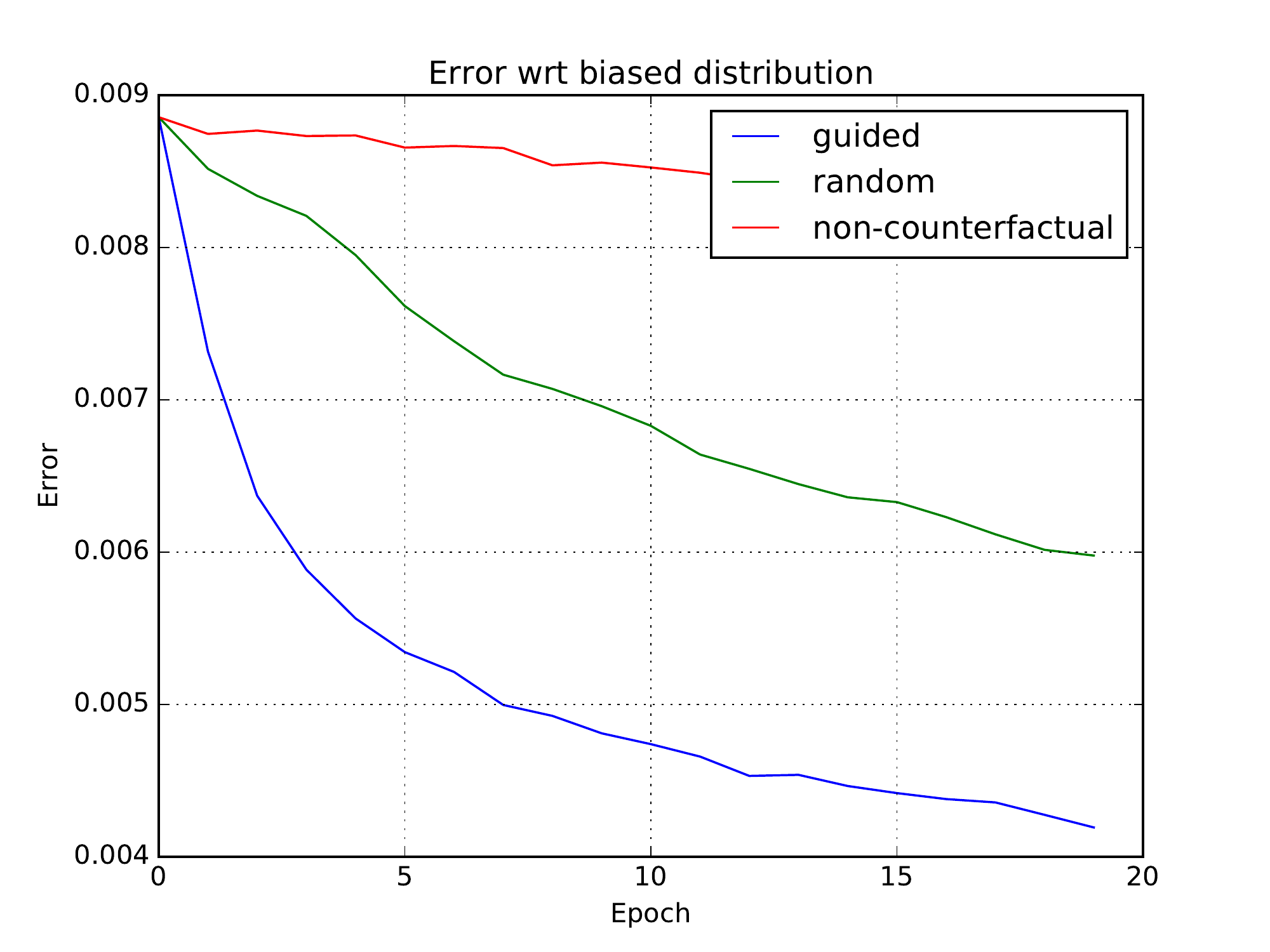}
\caption{Convergence of in-distribution error}
\label{fig:linear-adult:ide}
\end{subfigure}
\begin{subfigure}{.33\textwidth}
\centering
\includegraphics[width=0.9\linewidth]{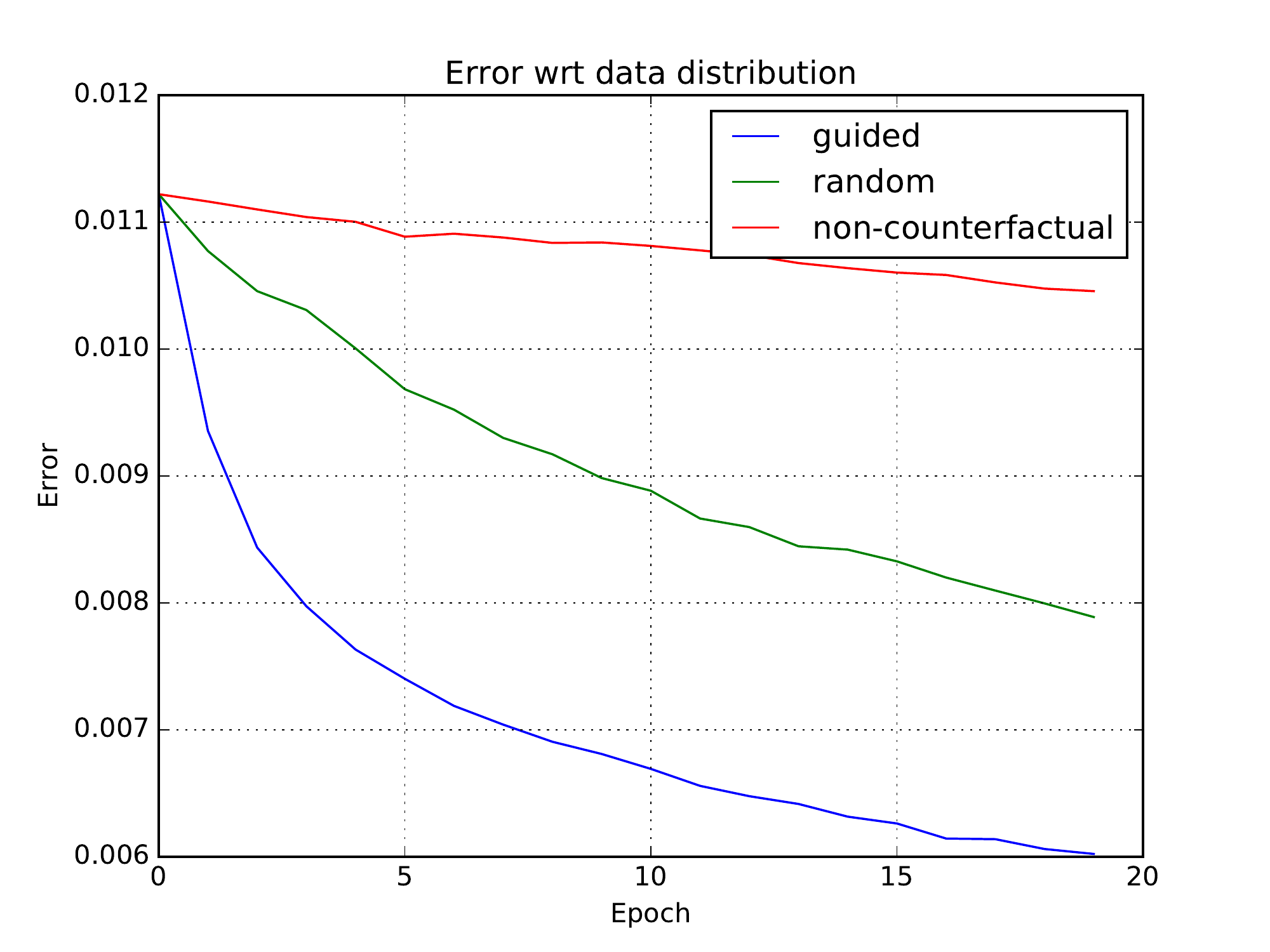}
\caption{Convergence of out of distribution error}
\label{fig:linear-adult:ode}
\end{subfigure}
\caption{Rates of convergence of counterfactual active learning with
  $\mathtt{guided}$, $\mathtt{random}$, and
  $\mathtt{non-counterfactual}$ settings.}
\label{fig:linear-adult}
\end{figure*}

\begin{figure}[!ht]
\begin{subfigure}{.5\textwidth}
\centering
\includegraphics[width=0.9\linewidth]{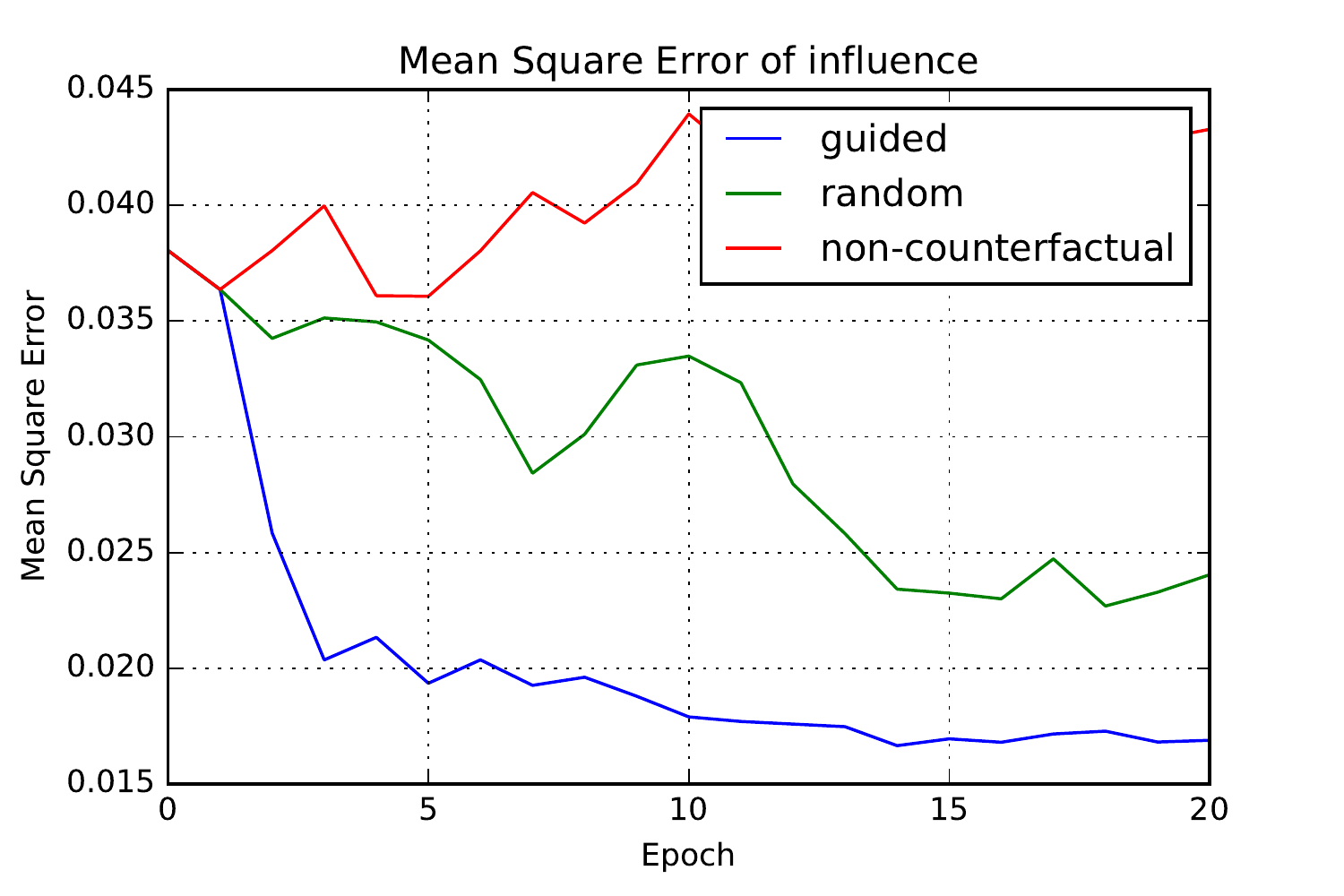}
\caption{Convergence of difference of influence}
\label{fig:tree-adult}
\end{subfigure}
\begin{subfigure}{.5\textwidth}
\centering
\includegraphics[width=0.9\linewidth]{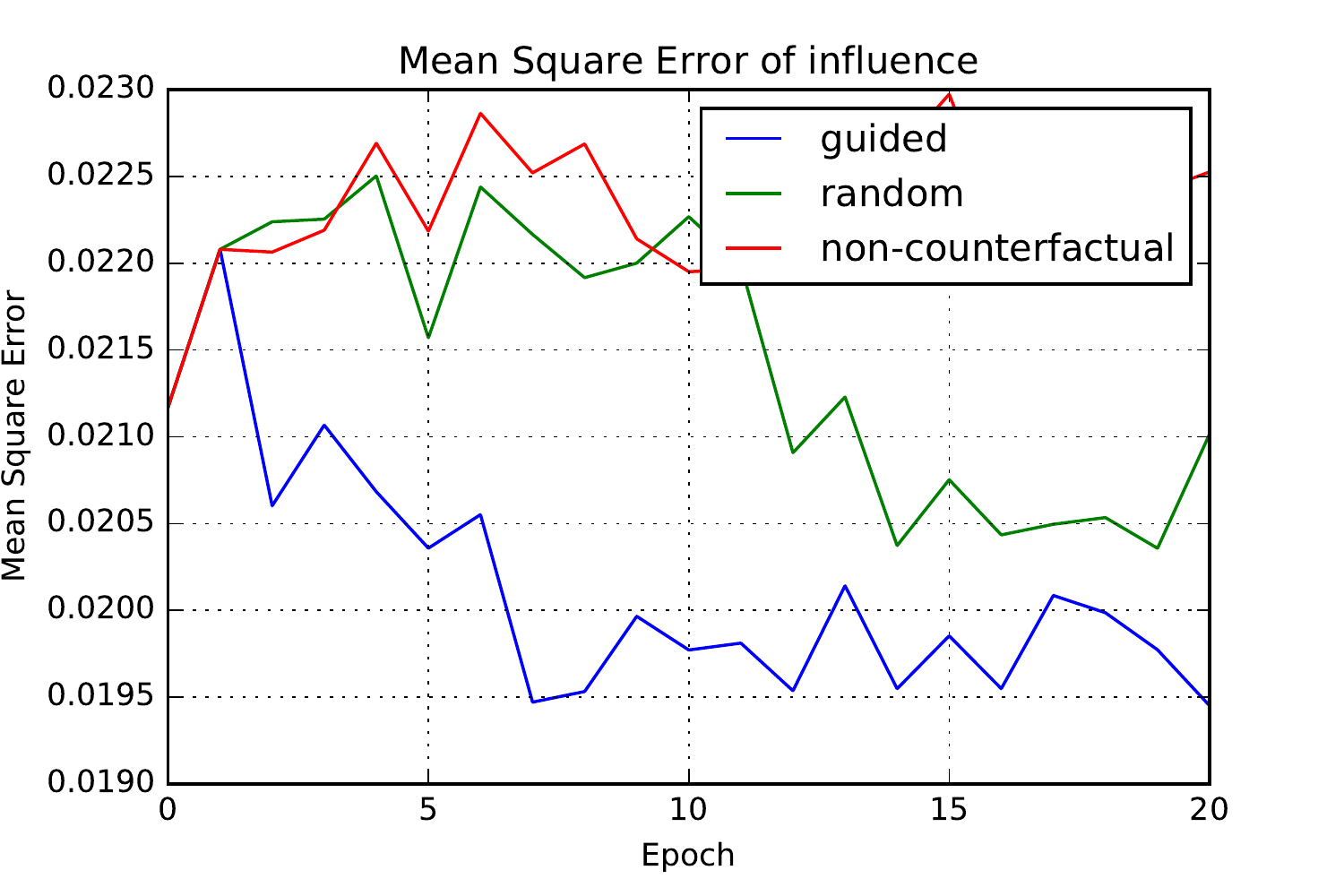}
\caption{Convergence of difference of influence}
\label{fig:forest-adult}
\end{subfigure}

\caption{Rates of convergence of counterfactual active learning with
  $\mathtt{guided}$, $\mathtt{random}$, and
  $\mathtt{non-counterfactual}$ settings.}
\end{figure}

In this section, we describe an active learning algorithm for training
a model that pushes the model towards the desired causal influences.
The learning is assisted by two oracles.
The first is a feature selection oracle that examines the causal
influences of input features, and chooses the feature for which
counterfactuals should be labeled.
We envision this oracle to be a domain expert that can identify
problematic causal influences based on background knowledge of causal
factors or ethical norms governing the classification task.
The second oracle, similar to a standard active learning oracle,
labels counterfactual points with their intended label.

\begin{algorithm}[ht]
  \caption{Counterfactual active learning.
    \label{alg:cal}}
  \begin{algorithmic}
    \Require Labeling oracle $\Oo$, Feature selection oracle $\F$
    \Procedure{CounterfactualActiveLearning}{$D, k$}
    \State $D$: training dataset
    \State $k$: labeling batch size
    \State $c \gets \text{train}(D)$
    \Repeat
    \State $\vec{\iota} \gets \text{feature influences}~QII(c, D)$
    \State $f \gets \text{feature}~\F(\vec{\iota})$
    \State $\hat{C} \xleftarrow[k]{\$} D_{-f}U_f$
    \State $\hat{y}\gets\text{oracle labels}~\Oo(\hat{C})$
    \State $D \gets D\cup\langle \hat{C}, \hat{y}\rangle$
    \State $c \gets \text{train}(D)$
    \Until stopping condition reached
    \State \Return $c$
    \EndProcedure
\end{algorithmic}
\end{algorithm}

The active learning process (Algorithm~\ref{alg:cal}), on every
iteration, computes the influences of features of a classifier trained
on the dataset.
The feature selection oracle $\F$ picks a feature.
Then $k$ points $\hat{C}$ are picked from the counterfactual
distribution, where $k$ is a pre-specified batch size parameter.
The parameter $k$ can also be thought of as a learning rate for the
algorithm.
The $k$ points in $\hat{C}$ are then labeled by the oracle $\Oo$ and
added to the training set.
A new classifier is trained on the augmented dataset and this process
is repeated until the stopping condition is reached.
The stopping condition can either be a pre-specified number of
iterations or a convergence condition when the classifier learnt does
not show a significant change in influences.

The choice of the feature selection oracle affect the speed of
convergence of the algorithm.
In our experiments, we consider two feature selection oracles (i) a
baseline \random~oracle, that picks features at random for generating
counterfactual queries, (ii) a \guided~oracle, that picks the feature
that has the highest difference in influence from the true influence.
In Section~\ref{sec:evaluation:results}, we demonstrate that an oracle
that deterministically picks the feature with the highest difference
in influence converges faster than an oracle that picks a feature at
random.

The rationale for training the classifier on points from the
counterfactual distribution is two-fold.
First, by adding points from the counterfactual distribution, the
algorithm reduces the divergence between the training distribution and
the counterfactual distribution, as a result, constraining the feature
influences of the learnt classifier according to
Theorem~\ref{thm:constrain}.
Additionally, by increasing accuracy of the classifier with respect to
the labeler on the counterfactual distribution, the influences of the
trained classifier are pushed closer to the influence of the
labeler (Theorem~\ref{thm:relating}).

\section{Evaluation}
\label{sec:evaluation}

In this section, we evaluate the counterfactual active learning
algorithm for linear, decision tree, and random forest models using a
synthetic labeler as ground truth.
In particular, we demonstrate that after counterfactual active
learning, the trained classifier has similar causal influence measures
to the labeler.
We also show that the classifier can generalize better to
out-of-distribution points.
This is an important consequence of having causal behavior similar to
the labeler.
And finally, we demonstrate that the accuracy on the data distribution
does not degrade as a result of this additional training.

\subsection{Methodology}

We evaluate our algorithm by training two predictors.
The first predictor provides the ground truth to be used by the
oracles.
The second predictor is trained on a biased version of the dataset
used to train the first predictor.
This approach induces a difference in influence in the two predictors.
In more detail, the following steps comprise our experimental
methodology.

\begin{itemize}
  \item Given: $D$ a dataset which is a sample from the original distribution,
  \item Train ground truth model $h_t$ on $D$. This model is used by the labeling oracle to respond to counterfactual queries.
  \item Select a random predicate $\theta$.
  \item Construct $D_b$ by excluding points from $D$ that satisfy $\theta$.
  \item Train predictor $h_b$ on a random subset of $D_b$, leaving the
    rest for testing and for use by the \noncount~baseline described
    below.
  \item Perform counterfactual active learning on predictor $h_b$.
\end{itemize}

The random predicate $\theta$ is chosen by training a short decision
tree on the dataset with random labels.
$D_b$ is intended to simulate a biased data collection mechanism in
order to induce artificial correlations in the dataset.
The induced artificial correlations create a gap between the
counterfactual distribution and the data distribution, thus making the
feature influences unconstrained.

We run the active learning algorithm under the following settings:
\begin{itemize}
\item \guided: At each iteration, the feature selection oracle selects
  the feature with the highest difference in auQII with respect to the
  base model.
\item \random: This is a baseline where the feature selection oracle
  selects a feature at random.
\item \noncount: This is another baseline where the labeling oracle
  labels fresh points from $D_b$ as opposed to from the counterfactual
  distribution.
\end{itemize}

The experiments presented here are run on the following datasets:
\begin{itemize}
  \item \adult: The benchmark adult dataset~\cite{uci-adult}, a subset
    of census data, is used to predict income from 13 demographic factors
    including age and marital status.
  \item \arrests: This data set is used to predict a history of arrests using 6 features
    extracted from the National Longitudinal Survey of Youth~\cite{nlsy97}
    such as drug and alcohol use.
  \item \lendingclub: A data set of loans originated by Lending Club~\cite{lendingclub} is
    used to predict charge-offs from 19 other financial variables about
    individuals.
\end{itemize}

These datasets represent prediction tasks that could be potentially used in settings such as predictive policing or credit, and where data processors are accountable for reasons behind prediction.

 All experiments presented here are run  with a batch size $t$ of $200$ for $20$
epochs, and averaged over $50$ runs of the algorithm.

\subsection{Results}
\label{sec:evaluation:results}

Figure~\ref{fig:linear-adult} shows the evolution of the active
learning algorithm for the three datasets dataset with a logistic regression
model.
In particular, Figure~\ref{fig:linear-adult:inf} shows the the change
of the mean square error of auQII between $h_b$ and $h_t$.
This figure shows that the feature influences converge to values close
to that of $h_t$ with the \guided~oracle.
The \random~oracle also converges but at a slower rate.
This result is useful as it indicates that the process does not
require the feature selection oracle to pick optimally.
The \noncount~algorithm does not affect the influence measures
significantly.
This is to be expected since it retrains using labeled points from
within the biased distribution.

In Figures~\ref{fig:linear-adult:ide} and ~\ref{fig:linear-adult:ode},
we show the accuracy of the classifier on holdout sets from $D_b$ and
$D$ respectively.
Figure~\ref{fig:linear-adult:ide} shows that the error on the data
distribution does not increase due to this additional training.
Further, Figure~\ref{fig:linear-adult:ode} shows that the error on the
unbiased dataset $D$ also decreases, even though parts of $D$ are not
in the training set.
This can be viewed as a  side-effect of the model becoming causally closer to the
ground truth model.
For all three datasets the \guided~oracle leads
to faster convergence across the three metrics.
The arrests dataset shows
similar behavior for the \random~and \guided~oracles which can be attributed to the dataset only containing a small number of features.

Figures~\ref{fig:tree-adult} and ~\ref{fig:forest-adult} show the effect of
counterfactual active learning on decision trees and random forests. They both
show a similar trend to linear models of the causal influences of $h_b$ converging
toward those of $h_t$ with counterfactual active learning.

\section{Conclusion and Future Work}

We articulate the problem of covariate shift in causal testing and
formally characterize conditions under which it arises. 
We present an algorithm for counterfactual active learning that addresses 
this problem. We empirically demonstrate with
synthetic labelers that our algorithm trains models that (i) have similar causal
influences as the labeler's model, and (ii) generalize better to out-of-distribution
points while (iii) retaining their accuracy on in-distribution points.

In this paper, we assume that the labeling oracle can label all points with equal
certainty and cost. However, for points further from the distribution, the
labeler might need to perform real experiments in order to label the points.
This suggests two interesting directions for future work. The first studies
mechanisms for answering counterfactual queries for points far away from the
distribution. The second involves designing an algorithm that takes into account 
the cost of a labeler in the learning process.

\ifdefined\anon
\else
\subsubsection*{Acknowledgments}
\begin{small}
  This work was developed with the support of NSF grants CNS-1704845
  as well as by the Air Force Research Laboratory under agreement
  number FA9550-17-1-0600.
  The U.S.
  Government is authorized to reproduce and distribute reprints for
  Governmental purposes not withstanding any copyright notation
  thereon.
  The views, opinions, and/or findings expressed are those of the
  author(s) and should not be interpreted as representing the official
  views or policies of the Air Force Research Laboratory, the National
  Science Foundation, or the U.S.
  Government.
\end{small}
\fi

\bibliographystyle{named}
\bibliography{bibs/blackbox.etc,bibs/causal-influence}

\end{document}